\newcommand{\id}{\mathbf{I}} 
\newcommand{\0}{\mathbf{0}} 
\newcommand{\x}{{\bf x}}
\newcommand{\jac}[1]{\mathbf{J}_{#1}}
\newcommand{\reals}{\mathbb{R}}
\newcommand{\bq}{\begin{equation}}
\newcommand{\eq}{\end{equation}}
\newcommand{\ba}{\begin{eqnarray}}
\newcommand{\ea}{\end{eqnarray}}
\newcommand{\originalvf}{f^{\text{real}}}
\newcommand{\ltwonorm}[1]{\|#1\|_2}
\newcommand{\sym}{\operatorname{{\bf sym}}}
\newcommand{\dt}{\;{\rm d}t}
\newcommand{\iu}{{\bf u}}
\newcommand{\samplex}{{\x^{\text{sample}}}}
\def\R{{\reals}}
\newcolumntype{H}{>{\setbox0=\hbox\bgroup}c<{\egroup}@{}}
\newtheorem{definition}{Definition}
\newtheorem{proposition}{Proposition}
\newtheorem{lemma}{Lemma}
\newtheorem{remark}{Remark}
\newtheorem{theorem}{Theorem}
\newcommand{\bachir}[1]{{#1}}
\title{\LARGE \bf Teleoperator Imitation with Continuous-time Safety}
\author{Bachir El Khadir$^{*,1}$, %
		Jake Varley$^{2}$, %
		Vikas Sindhwani$^{2}$\\
		$^1$ORFE, Princeton University \quad $^2$Google Brain Robotics
		\thanks{$^*$Work done during internship at Google Brain Robotics, NYC}}
\begin{document}
\maketitle

\begin{abstract}
Learning to effectively imitate human teleoperators, with generalization to unseen and dynamic environments, is a promising path to greater autonomy enabling robots to steadily acquire complex skills from supervision. We propose a new motion learning technique rooted in contraction theory and sum-of-squares programming for estimating a control law in the form of a polynomial vector field from a given set of  demonstrations.  Notably, this vector field is provably optimal for the problem of minimizing imitation loss while providing continuous-time  guarantees on the induced imitation behavior. Our method generalizes to new initial and goal poses of the robot and can adapt in real-time to dynamic obstacles during execution, with convergence to teleoperator behavior within a well-defined safety tube.  We present an application of our framework for pick-and-place tasks in the presence of moving obstacles on a 7-DOF KUKA IIWA arm. The method compares favorably to other learning-from-demonstration approaches on benchmark handwriting imitation tasks.



\end{abstract}

\section{Introduction}

Teleoperation is enabling robotic systems to become pervasive in settings where full autonomy is currently out of reach~\cite{goldberg1995desktop, zhang2018deep, dragan2012formalizing}. Compelling applications include minimally invasive surgery~\cite{talamini2002robotic,taylor2016medical}, space exploration~\cite{washington1999autonomous}, remote vehicle operations~\cite{fong2001vehicle} and disaster relief scenarios~\cite{marturi2016towards}. A human teleoperator can control a robot through tasks that have complex semantics and are currently difficult to explicitly program or to learn to solve efficiently without supervision.
 
A downside of teleoperation is that it requires continuous
 error-free~\cite{atkeson2016happened} operator attention even for highly repetitive tasks. This problem can be addressed through Learning-from-Demonstrations (LfD) or Imitation Learning techniques~\cite{schaal1999imitation,billard2008robot} where a control law needs to be inferred from a small number of demonstrations. Such a law can then bootstrap data-efficient reinforcement learning for challenging tasks \cite{zhang2018deep}.  The demonstrator attempts to ensure that the robot’s motions capture the relevant semantics of the task rather than requiring the robot to understand the semantics.  The learnt control law should take over from the teleoperator and enable the robot to repeatedly execute the desired task even in dynamically changing conditions. For example, the origin of a picking task and the goal of a placing task may dynamically shift to configurations unseen during training, and moving obstacles may be encountered during execution. The latter is particularly relevant in collaborative human-robot workspaces where safety guarantees are paramount. In such situations, when faced with an obstacle, the robot cannot follow the demonstration path anymore and needs to recompute a new motion trajectory in real-time to avoid collision and still attempt to accomplish the desired task.
 
 \begin{figure}[t]
  \centering
	\centering
    \begin{subfigure}[t]{0.48\textwidth}
		\centering
		\includegraphics[width=1\textwidth]{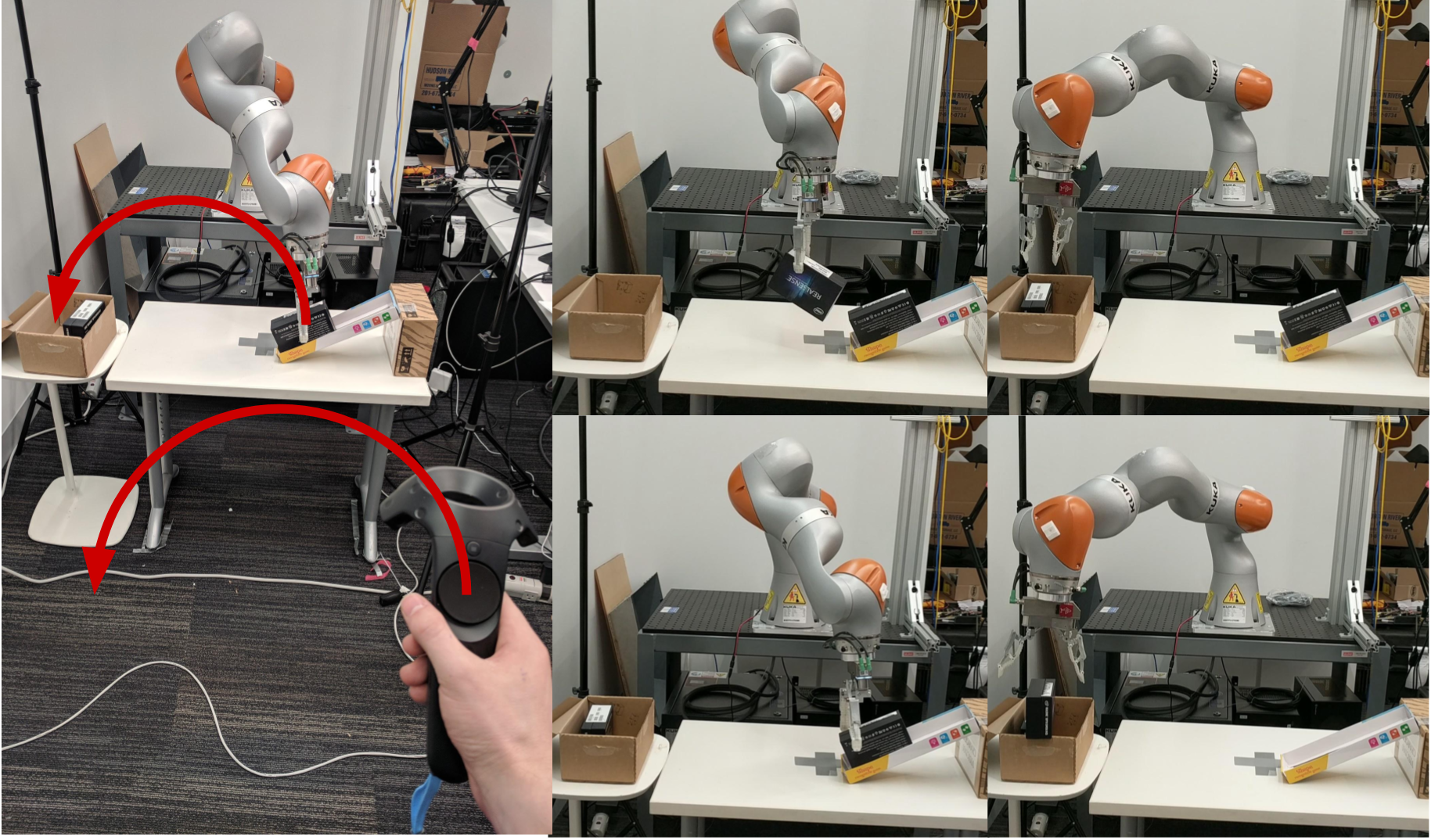}
		\caption{Pick and place teleoperation demonstration}
	\end{subfigure}
	\begin{subfigure}[t]{0.48\textwidth}
		\centering
		\includegraphics[width=1\textwidth]{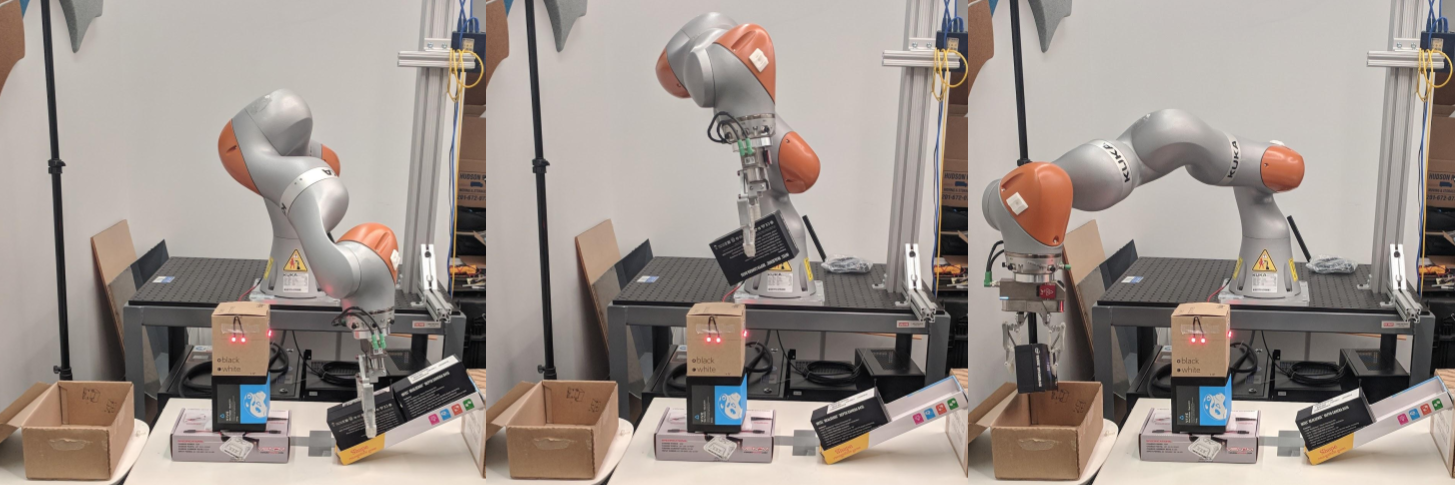}
		\caption{Pick and place via contracting vector fields with obstacle avoidance.}
		\label{pick_place_obstacle}
	\end{subfigure}
	\caption{(a) A non-technical user provides a demonstrates via teleoperation to accomplish a pick and place task. (b) The robot now autonomously executes the pick and place task with a contracting vector field (CVF) allowing for continuous time guarantees while also avoiding obstacles.}\vspace{-.5cm}
\end{figure}

Such real-time adaptivity can be elegantly achieved by associating demonstrations with a dynamical system~\cite{harish, sindhwani2018learning,CLFDM,khansari2011learning,khansari2017learning}: a vector field defines a closed-loop velocity control law. From any state that the robot finds itself in, the vector field can then steer the robot back towards the desired imitation behavior, without the need for path replanning with classical approaches. Furthermore, the learnt vector field can be modulated in real-time~\cite{khansari2012dynamical,slotine_obstacles,khatib1986real} in order to avoid collisions with obstacles.  

\begin{figure*}[ht]
    \centering
    \begin{subfigure}[t]{0.3\textwidth}
      \centering
      \includegraphics[trim={0 -0.45cm 0 0},clip, width=1\textwidth]{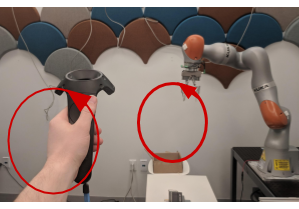}
      \caption{\label{fig:demo_traj} Demo trajectory $\samplex(t)$ started at (1,1) }
	\end{subfigure}\quad
    \begin{subfigure}[t]{0.3\textwidth}
      \centering
      \includegraphics[width=1\textwidth]{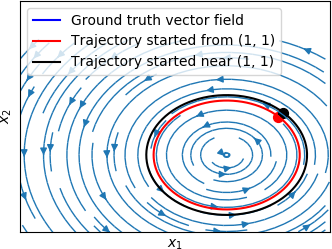}
      \caption{\label{fig:originalvf} True Vector Field $\originalvf$}
	\end{subfigure}\quad
	\begin{subfigure}[t]{0.3\textwidth}
	  \centering
      \includegraphics[width=1\textwidth]{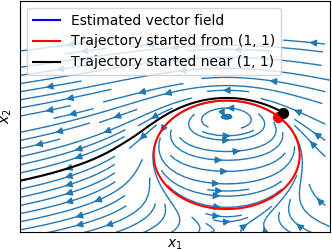}
      \caption{\label{fig:learnedvf} Estimated vector field $f^*$}
	\end{subfigure}
	\caption{ (a) a non-technical user demonstrates a circular trajectory. (b) the ``ground truth" vector field. (c) the estimated vector field. Both vector fields produce the same trajectory when started from $(1, 1)^T$ while they exhibit radically different behavior when started from a point arbitrarily close to $(1, 1)^T$.}
	\label{fig:illustration}\vspace{-.5cm}
\end{figure*}

At first glance, the problem of imitation learning of a \bachir{smooth} dynamical system, $\dot{\x} = f(\x)$ from samples $(\x, \dot{\x})$ appears to be a straightforward regression problem: simply minimize imitation loss $\sum_{i, t} \ltwonorm{f(\x^{(i)}(t)) - \dot \x^{(i)}(t)}^2$ over a suitable family of vector valued maps, $f\in {\cal F}$. However, a naive supervised regression approach may be woefully inadequate, as illustrated in the middle
panel of Figure~\ref{fig:illustration} where the goal is to have a KUKA arm imitate a circular trajectory. As can be seen, estimating vector fields from a small number of trajectories potentially leads to instability -- the estimated field easily diverges when the initial conditions are even slightly different from those encountered during training. Therefore, unsurprisingly, learning with stability constraints has been the technical core of existing dynamical systems based LfD approaches, e.g. see~\cite{khansari2011learning,khansari2017learning,harish,sindhwani2018learning}. However, these methods have one or more of the following limitations: (1) they involve non-convex optimization for dynamics fitting and constructing Lyapunov stability certificates respectively \bachir{and, hence,} have no end-to-end optimality guarantees, (2) the notion of stability is not trajectory-centric, but rather focused on reaching a single desired equilibrium point, and (3) they are computationally infeasible when formulated in continuous-time. With this context, our contributions in this paper include \bachir{the following:}
\begin{itemize}
\item We formulate a novel continuous time optimization problem over vector fields involving an imitation loss subject to a generalization-enforcing constraint that turns the neighborhood of demonstrations into contracting regions~\cite{lohmiller1998contraction}. Within this region, all trajectories are guaranteed to coalesce towards the demonstration exponentially fast.
\item We show that our formulation leads to an instance of time-varying semidefinite programming~\cite{ahmadi2018tvsdp} for which a sum-of-squares relaxation~\cite{lasserre_global_2001,parrilo_semidefinite_2003,ahmadi2014towards} turns out to be exact! Hence, we can find the {\it globally optimal} polynomial  vector  field  that  has  the  lowest  imitation loss among  all  polynomial  vector  fields  of a given degree  that  are contracting on a region around the demonstrations in continuous time. 
\item On benchmark handwriting imitation tasks~\cite{lasa}, our method outperforms competing approaches in terms of a variety of imitation quality metrics.
\item We demonstrate our methods on a 7DOF KUKA pick-and-place LfD task where task completeness is accomplished despite dynamic obstacles in the environment, changing initial poses and moving destinations. By contrast, without contraction constraints, the  vector field tends to move far from the demonstrated trajectory activating emergency breaks on the arm and failing to complete the task.
\end{itemize}
Our ``dirty laundry" includes: (1) we cannot handle high degree polynomials due to the scalability limitations of current SDP solvers, and (2) our notion of incremental stability is local, even though our method generalizes well in the sense that a wide contraction tube is setup around the demonstrations.

\section{Problem Statement}
\label{sec:problem-statement}

We are interested in estimating an unknown continuous time autonomous dynamical system 
\begin{equation}
\label{eq:ode}
\dot{\x} = \originalvf(\x),
\end{equation}
where $\originalvf: \mathbb R^n \rightarrow \mathbb R^n$ is an unknown continuously differentiable function.

We assume that we have access to one or several sample trajectories $\x^{(i)}:~[0, T]~\mapsto~\mathbb R^n$ that satisfy $\dot \x^{(i)} = \originalvf(\x^{(i)}) \; \forall t \in [0, T]$, where $T > 0$ and $i=1, \ldots, M$. These trajectories ($\x^{(i)}$, for $i=1 \ldots, M$) constitute our {\it training data}, and our goal is to search for an approximation of the vector field $\originalvf$ in a class of functions of interest $\mathcal F$ that reproduces trajectories as close as possible to the ones observed during training. In other words, we seek to solve the following continuous time least squares optimization problem (LSP):
\begin{equation}
  \label{eq:least_squares_problem}\tag{LSP}
  f^* \in \arg \min_{f \in \mathcal F}  \sum_{i=1}^M \int_{t=0}^T \ltwonorm{f(\x^{(i)}(t)) - \dot \x^{(i)}(t)}^2 \;\dt.
\end{equation}


In addition to consistency with $\originalvf$, we want our learned vector field $f$ to generalize in conditions that were not seen in the training data. Indeed, the LSP problem generally admits multiple solutions, as it only dictates how the vector field should behave on the sample trajectories. This under-specification can easily lead to overfitting, especially if the class of function $\mathcal F$ is expressive enough. The example of Figure~\ref{fig:illustration} reinforces this phenomenon even for a simple circular motion. Note that standard data-independent regularization (e.g., $L_2$ regularizer) is insufficient to resolve the divergence illustrated here: a stronger stabilizer ensuring convergence, not just smoothness, of trajectories is needed. The notion of stability of interest to us in this paper is {\it contraction} which we now briefly review.

\subsection{Incremental Stability and Contraction Analysis}
\newcommand{\Metric}{{\bf M}}
\newcommand{\mnorm}[1]{{\|#1\|_{\Metric(\x)}}}

\label{sec:contraction} Notions of stability called {\it incremental stability} and associated contraction analysis tools~\cite{jouffroy2010tutorial,lohmiller1998contraction} are concerned with the convergence of system trajectories with respect to each other, as opposed to classical Lyapunov stability which is with respect to a single equilibrium. Contraction analysis derives sufficient and necessary conditions under which the displacement between any two trajectories will go to zero. We give in this section a brief presentation of this notion based on \cite{aylward2008stability}.

Contraction analysis of a system $\dot \x = f(\x)$ is best explained by considering the dynamics of $\delta \x(t)$, the infinitesimal displacement between two trajectories:
$$\delta \dot \x =  \jac f(\x) \delta \x \text{ where } \jac f(\x) = \frac{\partial}{\partial \x}f.$$
From this equation we can easily derive the rate of change of the infinitesimal squared distance between two trajectories $\ltwonorm{\delta \x}^2 = \delta \x^T\delta \x$ as follows:

\begin{equation}\label{eq:rate_infinitesimal_distance}
\frac{{\rm d}}{{\rm d}t} \|\delta \x\|_2^2 = 2 \delta \x^T\delta\dot \x = 2  \delta \x^T\jac f(\x) \delta \x.
\end{equation}

More generally, we can consider the infinitesimal squared distance with respect to a metric that is different from the Euclidian metric. A metric is given by smooth, matrix-valued function $\Metric:\R^+ \times \R^n \mapsto \R^{n \times n}$ that is uniformly positive definite, i.e. there exists $\varepsilon > 0$ such that
\begin{equation}
\label{eq:assumption_metric}
\Metric(t, \x) \succeq \varepsilon \id \quad \forall t \in \R^+, \; \forall \x \in \R^n,
\end{equation}
where $\id$ is the identity matrix and the relation $A \succeq B$ between two symmetric matrices $A$ and $B$ is used to denote that the smallest eigenvalue of their difference $A -B$ is nonnegative.
For the clarity of presentation, we only consider metric functions $\Metric(\x)$ that do not depend on time $t$.

The squared norm of an infinitesimal displacement between two trajectories with respect to this metric  is  given by $\mnorm{\delta\x}^2 \coloneqq \delta \x^T \Metric(\x) \delta \x$.
The Euclidean metric corresponds to the case where $\Metric(\x)$ is constant and equal to the identity matrix.

Similarly to \eqref{eq:rate_infinitesimal_distance}, the rate of change of the squared norm of an infinitesimal displacement with respect to a metric $\Metric(\x)$ follows the following dynamics:
\begin{equation}\label{eq:rate_infinitesimal_m_distance}
\frac{{\rm d}}{{\rm d}t}\mnorm{\delta \x}^2 = \delta \x^T(\sym[\Metric(\x) \jac f(\x)] + \dot \Metric(\x))\delta \x,
\end{equation}
where $\sym[M]$ denotes $(M + M^T)/2$ for any square matrix $M$ and $\dot \Metric(\x)$ is the $n \times n$ matrix whose $(i,j)\text{-entry}$ is $\nabla \Metric_{ij}(\x)^T f(\x)$.
This motivates the following definition of {\it contraction}.
\begin{definition}[Contraction]
 For a positive constant $\tau$ and a subset $U$ of $\R^n$ the system $\dot \x = f(\x)$ is said to be $\tau\text{-\it{contracting}}$ on the region $U$ with respect to a metric $\Metric(\x)$ if 
\begin{equation}\label{eq:contraction_lmi}
\sym[\Metric(\x)\jac f(\x)] + \dot \Metric(\x) \preceq -\tau \Metric(\x) \quad \forall \x \in U.
\end{equation}
\end{definition}

\begin{remark}
When the vector field $f$ is a linear function $\dot{\x} = A \x$, and the metric $\Metric(\x)$ is constant $\Metric(\x)=P$, it is easy to see that contraction condition \eqref{eq:contraction_lmi} is in fact equivalent to global stability condition, \begin{equation}\label{eq:gas_linear_system_lmi}
    P \succ 0 \text{ and } \bachir{ \sym(PA^T) \preceq -\tau P.}
\end{equation}
\end{remark}

Given a $\tau\text{-\it{contracting}}$ vector field with respect to a metric $\Metric(\x)$, we can conclude from the dynamics in \eqref{eq:rate_infinitesimal_m_distance} that
$$\frac{{\rm d}}{\dt} \mnorm{\delta\x}^2 \le -\tau \mnorm{\delta\x}$$
Integrating both sides yields,
 $$\mnorm{\delta\x} \le e^{-\frac \tau 2 t} \mnorm{\delta\x(0)}$$

 Hence, any infinitesimal length $\mnorm{\delta\x}$ (and by assumption \eqref{eq:assumption_metric}, $\ltwonorm{\delta\x}$) converges exponentially to zero as time goes to infinity. This implies that in a contraction region, trajectories will tend to \bachir{converge together}  towards a nominal path. If the entire state-space is contracting and a finite equilibrium exists,
then this equilibrium is unique and all trajectories converge
to this equilibrium.




In the next section, we explain how to {\it globally} solve the following continuous-time vector field optimization problem to fit a contracting vector field to the training data given some fixed metric $\Metric(\x)$.  We refer to this as the least squares problem with contraction (LSPC):

\begin{align}
\min_{f \in \mathcal F}& \sum_{i=1}^M\int_{t=0}^T \|f(\x^{(i)}(t)) - \dot{\x^{(i)}}(t)\|^2_2 \; \dt \label{eq:least_squares_with_contraction}\tag{LSPC}\\
  &\textrm{s.t. $f$ is contracting on a region $U \subseteq \R^n$}\nonumber\\
  &\textrm{containing the demonstrations $\x^{(i)}(t)$}\nonumber\\
  &\textrm{with respect to the metric $\Metric(\x)$.}\nonumber
\end{align}

The search for a contraction metric itself may be interpreted as the search for a Lyapunov function of the specific form $V(\x) = f(\x)^T \mathbf{M}(\x) f(\x)$.  As is the case with Lyapunov analysis in general, finding such an incremental stability certificate for a given dynamical system is a nontrivial problem; see~\cite{aylward2008stability} and references therein. If one wishes to find the vector field and a corresponding contraction metric at the same time, then the problem becomes non-convex. A common approach to handle this kind of problems is to optimize over one parameter at a time and fix the other one to its latest value and then alternate (i.e. fix a contraction metric and fit the vector field, then fix the vector field and improve on the contraction metric.)

\section{Learning Contracting Vector Fields as a Time-Varying Convex Problem}

In this section we explain how to formulate and solve the problem of learning a contracting vector field from demonstrations described in \eqref{eq:least_squares_with_contraction}. We will first see that we can formulate it as a {\it time-varying semidefinite problem}. We  will then describe how to use tools from {\it sum of squares programming} to solve it.

\subsection{Time-Varying Semidefinite Problems}
\newcommand{\lossTVC}{L}
\newcommand{\mapTVC}{\mathcal L}
We call time-varying semidefinite problems \eqref{eq:tv-convex-problem} optimization programs of the form
\begin{align}
  \min_{f \in \mathcal F}& \quad \lossTVC(f) \label{eq:tv-convex-problem}\tag{\text{TV-SDP}}\\
  \textrm{s.t.}& \quad \mapTVC_i f(t) \succeq 0 \quad \forall i=1,\ldots,m \; \forall t \in [0, T],\nonumber
\end{align}
where the variable $t \in [0, T]$ stands for time, the loss function  $\lossTVC: \mathcal F \mapsto \R$ in the objective is assumed to be convex and the $\mapTVC_i$ ($i=1,\ldots,m$) are linear functionals that map an element $f \in \mathcal F$ to a matrix-valued function $\mapTVC_i f: [0, T] \mapsto \R^{n\times n}$. 
%
%
We will restrict the space of functions $\mathcal F$ to be the space of functions whose components are polynomials of degree $d \in \mathbb N$:
\begin{equation}
\label{eq:feasible_set_tvc}
\mathcal F \coloneqq \{ f: \R^n \mapsto \R^n \; | \; f_i \in \R_d[\x]\},
\end{equation}
and we make the assumption that $\mapTVC_i f$ is a matrix with polynomial entries.
Our interest in this setting stems from the fact that polynomial functions can approximate most functions reasonably well. Moreover, polynomials are suitable for algorithmic operations as we will see in the next section. See~\cite{ahmadi2018tvsdp} for a more in-depth treatment of time-varying semidefinite programs with polynomial data.

\newcommand{\trainingdata}{T}
\newcommand{\polytrainingdata}{{T^{\text{poly}}}}
\newcommand{\xni}{\x^{(i)}}
\newcommand{\dotxni}{\dot \x^{(i)}}
\newcommand{\xpoly}{\xni_{\text{poly}}}
\newcommand{\dotxpoly}{\dotxni_{\text{poly}}}

Let us now show how to reformulate the problem in \eqref{eq:least_squares_with_contraction} of fitting a vector field $f: \R^n \mapsto \R^n$ to $m$ sample trajectories $\{(\xni(t), \dotxni(t)) \; | \; t \in [0, T], \; i=1, \ldots, m\}$ as a~\eqref{eq:tv-convex-problem}. For this problem to fit within our framework, we start by approximating each trajectory $\xni(t)$ with a polynomial function of time $\xpoly(t)$. Our decision variable is the polynomial vector field $f$ and we seek to optimize the following objective function
\begin{equation}\label{eq:loss_cvf}
\lossTVC(f) \coloneqq \sum_{i=1}^M\int_{t=0}^T \|f(\xpoly(t)) - \dotxpoly(t)\|^2_2 \; {\rm d}t    
\end{equation} 
which is already convex (in fact convex quadratic). In order to impose the contraction of the vector field $f$ over some region around the trajectories in demonstration, we use a smoothness argument to claim that it is sufficient to impose contraction {\it only on} the trajectories themselves. See Proposition~\ref{lem:size_contraction_tube} later for a more quantitative statement of this claim. To be concrete, we take 
\begin{align}\label{eq:constraints_cvf}
\mapTVC_i f(\cdot) \coloneqq &-\sym[\Metric(\xpoly(\cdot))\jac f(\xpoly(\cdot))]\nonumber
\\&- \dot \Metric(\xpoly(\cdot)) - \tau \Metric(\xpoly(\cdot)),
\end{align}
where $\Metric(\x)$ is some known contraction metric.


\subsection{Sum-Of-Squares Programming}
In this section we review the notions of sum-of-squares (SOS) programming and its applications to polynomial optimization, and how we apply it for learning a contracting polynomial vector field. SOS techniques have found several applications in Robotics: constructing Lyapunov functions~\cite{ahmadi2012algebraic}, locomotion planning~\bachir{\cite{posa2017balancing}}, design and verification of provably safe controllers~\cite{majumdar2013control}, grasping and manipulation~\cite{dai2018synthesis}, inverse optimal control~\cite{pauwels2014inverse} and modeling 3D geometry~\cite{ahmadi2016geometry}.

Let $\R_d[\x]$ be the ring of polynomials $p(\x)$ in real variables $\x = (x_1, \ldots, x_n)$ with real coefficients of degree at most $d$. 
A polynomial $p \in \R[\x]$ is nonnegative if $p(\x) \ge 0$ for every $\x \in \R^n$. In many applications, including the one we cover in this paper, we seek to find the coefficients of one (or several) polynomials without violating some nonnegativity constraints. While the notion of nonnegativity is conceptually easy to understand, even testing whether a given polynomial is nonnegative is known to be NP-hard as soon as the degree $d \ge 4$ and the number of variables $n \ge 3$.

A polynomial $p \in \R_d[\x]$, with $d$ even, is a sum-of-squares (SOS) if there exists polynomials $q_1, \ldots, q_m \in \R_{\frac d2}[\x]$ such that
\begin{equation}\label{eq:sos}
    p(\x) = \sum_{i=1}^m q_i(\x)^2.
\end{equation}
An attractive feature of the set of SOS polynomials is that optimizing over it can be cast as a semidefinite program of tractable size, for which many solvers already exist. Indeed, it is known \cite{lasserre_global_2001}\cite{parrilo_semidefinite_2003} that a polynomial $p(\x)$ of degree $d$ can be decomposed as in \eqref{eq:sos} if and only if there exists a positive semidefinite matrix $Q$ such that
$$p(\x) = z(\x)^T Q z(\x) \; \forall \x \in \R^n,$$
where $z(\x)$ is the vector of monomials of $\x$ up to degree $\frac{d}2$, and the equality between the two sides of the equation is equivalent to a set of linear equalities in the coefficients of the polynomial $p(\x)$ and the entries of the matrix $Q$.

{\it Sum-of Squares Matrices}: If a polynomial $p(\x)$ is SOS, then it is obviously nonnegative, and the matrix $Q$ acts as a certificate of this fact, making it easy to check that the polynomial at hand is nonnegative for every value of the vector $\x$. In order to use similar techniques to impose contraction of a vector field, we need a slight generalization of this concept to ensure that a {\it matrix-valued} polynomial $P(\x)$ (i.e. a matrix whose entries are polynomial functions) is positive semidefinite (PSD) for all values of $\x$. We can equivalently consider the scalar-valued polynomial  $p(\x, \iu) \coloneqq \iu^TP(\x)\iu$, where $\iu$ is a $n\times1$ vector of new indeterminates, as positive semidefiniteness of $P(\x)$ is equivalent to the nonnegativity of $p(\x, \iu)$. If $p(\x, \iu)$ is SOS, then we say that $P$ is {\it a sum-of-squares matrix} (SOSM) \cite{kojima2003sums, gatermann2004symmetry, scherer2006matrix}. Consequently, optimizing over SOSM matrices is a tractable problem.

{\it Exact Relaxation}: A natural question here is how much we lose by restricting ourselves to the set of SOSM matrices as opposed the set of PSD matrices. In general, these two sets are quite different \cite{choi1975positive}.
In our case however, all the matrices considered are univariate as they depend only on the variable of time $t$. {\it It turns out that, in this special case, these two notions are equivalent!} 

\begin{theorem}[See e.g. \cite{choi1980real}]
  A matrix-valued polynomial $P(t)$ is PSD everywhere (i.e. $P(t) \succeq 0 \; \forall t \in \R$) if and only if the associated polynomial $p(t, \iu) \coloneqq \iu^T P(t) \iu$ is SOS.
\end{theorem}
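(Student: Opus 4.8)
The plan is to prove this as a statement about univariate matrix polynomials, reducing it to the classical fact that a nonnegative univariate scalar polynomial is a sum of two squares. The non-trivial direction is that PSD-everywhere implies the SOS representation of $p(t,\iu) = \iu^T P(t)\iu$; the converse is immediate, since an SOS polynomial is pointwise nonnegative in all its variables, hence $\iu^T P(t)\iu \ge 0$ for all $t,\iu$, which is exactly $P(t) \succeq 0$ for all $t$.

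\emph{First} I would establish a spectral/diagonalization normal form for $P(t)$. The key structural tool is the Smith-form-type factorization over $\R[t]$: any symmetric matrix polynomial $P(t)$ that is PSD on all of $\R$ can be written as $P(t) = U(t)^T D(t) U(t)$ where $U(t)$ is a matrix polynomial and $D(t)$ is diagonal with entries that are (necessarily nonnegative, univariate) polynomials --- or, slightly more carefully, one argues via a factorization $P(t) = A(t)^T A(t)$ for some matrix polynomial $A(t)$ (possibly of larger row dimension). One way to get there: over the field $\R(t)$ of rational functions, $P$ admits an LDL$^T$ (congruence) decomposition; the PSD-everywhere hypothesis forces the diagonal pivots to be nonnegative as rational functions, and a clearing-denominators / Gram-matrix argument promotes this to a polynomial factorization. \emph{Alternatively}, and perhaps cleaner to cite, one can invoke the known result (e.g., \cite{choi1980real}) that a univariate PSD matrix polynomial factors as a sum of (at most $n$) rank-one squares of matrix polynomials, i.e. $P(t) = \sum_j v_j(t) v_j(t)^T$ with $v_j(t) \in \R[t]^n$.

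\emph{Second}, given such a factorization $P(t) = \sum_j v_j(t)\, v_j(t)^T$, the desired SOS certificate is immediate:
\[
p(t,\iu) = \iu^T P(t) \iu = \sum_j \iu^T v_j(t)\, v_j(t)^T \iu = \sum_j \bigl( v_j(t)^T \iu \bigr)^2,
\]
and each $v_j(t)^T\iu$ is a polynomial in $(t,\iu)$ (linear in $\iu$), so $p(t,\iu)$ is a sum of squares of polynomials. This gives the SOSM property. One should double-check the degree bookkeeping — each $v_j$ has degree at most $\deg_t(P)/2$ in $t$ — but this is routine and exactly mirrors the scalar case.

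\emph{The main obstacle} is establishing the polynomial (not merely rational-function) factorization $P(t) = A(t)^T A(t)$ from the PSD-everywhere hypothesis; this is where all the real content lies and is the univariate-matrix analogue of the one-variable scalar theorem that nonnegative $\Rightarrow$ SOS, which itself rests on the fundamental theorem of algebra (real roots of a nonnegative polynomial have even multiplicity, and complex roots come in conjugate pairs). The honest approach here is to cite the result rather than reprove it --- indeed the theorem statement already says ``See e.g. \cite{choi1980real}'' --- so the proof in the paper is likely just the two-line reduction above together with a pointer to the matrix factorization lemma. If a self-contained argument were wanted, I would do induction on $n$: peel off one suitable rank-one term $v(t)v(t)^T$ using a minimal-degree diagonal entry after a polynomial congruence, show the Schur complement remains a PSD-everywhere matrix polynomial of size $n-1$, and recurse — the care needed is ensuring the entries stay polynomial at each step, which is precisely where the nonnegativity hypothesis is used.
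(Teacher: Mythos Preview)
The paper does not prove this theorem at all; it is stated with the citation ``See e.g.\ \cite{choi1980real}'' and used as a black box, so there is no in-paper argument to compare against. Your proposal correctly anticipates this, and your sketch --- trivial converse, then invoke the univariate matrix factorization $P(t)=\sum_j v_j(t)v_j(t)^T$ with polynomial $v_j$ and read off $p(t,\iu)=\sum_j (v_j(t)^T\iu)^2$ --- is exactly the standard route and is sound.
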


The next theorem generalizes this result to the case where we need to impose PSD-ness only on the interval $[0, T]$ (as opposed to $t \in \R$.)

\begin{theorem}[See Theorem 2.5 of \cite{dette_matrix_2002}]
  A matrix-valued polynomial $P(t)$ of degree $d$ is PSD on the interval $[0, T]$ (i.e. $P(t) \succeq 0 \; \forall t \in [0, T]$) if and only if can be written as
  \[\left\{
  \begin{array}{lll}
  P(t) &= t V(t) + (T-t) W(t) & \textrm{if $\deg(P)$ odd,}\\
  P(t) &= V(t)   + t(T-t) W(t) & \textrm{if $\deg(P)$ even.}
  \end{array}\right.\]
  where $V(t)$ and $W(t)$ are SOSM. In the first case, $V(t)$ and $W(t)$ have degree at most $\deg(P) - 1$, and in the second case $V(t)$ (resp. $W(t)$) has degree at most $\deg(P)$ (resp. $\deg(P)-2$).
  When that is the case, we say that $P(t)$ is SOSM on $[0, T]$.
\end{theorem}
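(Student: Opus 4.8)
The plan is to deduce this interval Markov--Lukács statement for matrix polynomials from the preceding ``global'' theorem, which I may assume: $P(t)\succeq 0$ for all $t\in\R$ if and only if $P(t)=\sum_i q_i(t)q_i(t)^\transpose$ for some vector polynomials $q_i$ with $\deg q_i\le\tfrac12\deg P$ (equivalently, $u^\transpose P(t)u$ is SOS). The bridge from ``PSD on $[0,T]$'' to ``PSD on all of $\R$'' will be a rational substitution that folds the interval onto the line.

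The ``if'' direction is immediate: after rescaling take $T=1$; if $\deg P$ is odd and $P(t)=t\,V(t)+(1-t)\,W(t)$ with $V,W$ SOSM, then on $[0,1]$ both $t\ge 0$ and $1-t\ge 0$ while $V(t),W(t)\succeq 0$, so $P(t)\succeq 0$, and similarly in the even case using $t(1-t)\ge 0$ on $[0,1]$.

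For ``only if'', assume $P(t)\succeq 0$ on $[0,1]$ and set $d=\deg P$. First I would pass to $\tilde P(s)\coloneqq(1+s^2)^{d}\,P\!\big(\tfrac{s^2}{1+s^2}\big)=\sum_{j=0}^{d}P_j\,s^{2j}(1+s^2)^{d-j}$, a matrix polynomial of degree at most $2d$ that is an even function of $s$ and is PSD on all of $\R$, since $s\mapsto\tfrac{s^2}{1+s^2}$ maps $\R$ into $[0,1)$ (the endpoint $t=1$ being recovered, in the reverse direction, from continuity of $P$ and closedness of the PSD cone). Applying the global theorem gives $\tilde P(s)=\sum_i q_i(s)q_i(s)^\transpose$ with $\deg q_i\le d$. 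Using that $\tilde P$ is even, I would average this decomposition with its reflection $s\mapsto-s$ and split $q_i(s)=a_i(s^2)+s\,b_i(s^2)$ into even and odd parts; the cross terms cancel in aggregate, leaving $\tilde P(s)=\sum_i a_i(s^2)a_i(s^2)^\transpose+s^2\sum_i b_i(s^2)b_i(s^2)^\transpose$, where $\deg q_i\le d$ forces $\deg a_i\le\lfloor d/2\rfloor$ and $\deg b_i\le\lfloor(d-1)/2\rfloor$ in the argument. Finally I would undo the substitution via $s^2=\tfrac{t}{1-t}$, $1+s^2=\tfrac1{1-t}$, so that $\tilde P=(1-t)^{-d}P(t)$; multiplying by $(1-t)^{d}$ and noting that $\hat a_i(t)\coloneqq(1-t)^{\lfloor d/2\rfloor}a_i(\tfrac{t}{1-t})$ and $\hat b_i(t)\coloneqq(1-t)^{\lfloor(d-1)/2\rfloor}b_i(\tfrac{t}{1-t})$ are honest vector polynomials, the leftover powers of $(1-t)$ collapse so that the $a$- and $b$-parts reassemble exactly into $P(t)=(1-t)V(t)+tW(t)$ for odd $d$ and $P(t)=V(t)+t(1-t)W(t)$ for even $d$, with $V=\sum_i\hat a_i\hat a_i^\transpose$ and $W=\sum_i\hat b_i\hat b_i^\transpose$ SOSM; the degree bounds on $\hat a_i,\hat b_i$ give exactly the claimed degree bounds on $V,W$. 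Rescaling from $[0,1]$ back to $[0,T]$ (equivalently using $t=\tfrac{Ts^2}{1+s^2}$ from the start) is routine.

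The crux --- and the main obstacle --- is getting the sharp \emph{two}-term decomposition rather than a weaker three-term one $V_0+tV_1+(1-t)V_2$: this is precisely what the even/odd splitting after the $s\mapsto-s$ symmetrization provides, so the argument hinges on choosing the substitution so that $\tilde P$ is even of degree $\le 2d$, and then on the degree bookkeeping confirming that clearing denominators after the back-substitution never pushes $V$ or $W$ past the advertised degrees ($d-1$ in the odd case; $d$ and $d-2$ in the even case). A minor point is the right endpoint $t=T$, dispatched by continuity of $P$ and closedness of the PSD cone.
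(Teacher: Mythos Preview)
The paper does not actually prove this theorem; it simply quotes it as Theorem~2.5 of \cite{dette_matrix_2002} and uses it as a black box. So there is no ``paper's own proof'' to compare your attempt against.

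That said, your argument is correct and is essentially the classical Markov--Luk\'acs reduction adapted to matrix polynomials. The key steps all check out: the rational substitution $t=\tfrac{s^2}{1+s^2}$ makes $\tilde P$ an even matrix polynomial of degree at most $2d$ that is PSD on all of $\R$; the global univariate theorem then gives $\tilde P(s)=\sum_i q_i(s)q_i(s)^\transpose$ with $\deg q_i\le d$; the averaging with $s\mapsto -s$ kills the cross terms and yields $\tilde P(s)=A(s^2)+s^2B(s^2)$ with $A,B$ SOSM; and the back-substitution with the degree accounting (namely $d-2\lfloor d/2\rfloor$ and $d-1-2\lfloor(d-1)/2\rfloor$ being $0,1$ or $1,0$ according to parity) delivers exactly the two advertised forms with the sharp degree bounds. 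One cosmetic point: in the odd case your labeling produces $P(t)=(1-t)V(t)+tW(t)$ with $V=\sum_i\hat a_i\hat a_i^\transpose$, $W=\sum_i\hat b_i\hat b_i^\transpose$, which is the statement's formula with $V$ and $W$ swapped; this is harmless since both have degree at most $d-1$.
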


\subsection{Main Result and CVF-P}
The main result of this section is summarized in the following theorem that states that the problem of fitting a contracting polynomial vector field to polynomial data can be cast as a semidefinite program.

\begin{theorem}\label{thm:learn_cvf_as_sdp}
The following semidefinite program
\begin{align}
\label{eq:sos_program}
\min_{f \in \mathcal F}& \quad \sum_{i=1}^M \int_{t=0}^T \|f(\x_p^{(i)}(t)) - \dot{\x_p}^{(i)}(t)\|^2_2 \; \dt \tag{LSPC-SOS} \\
\textrm{s.t.}& \quad \mapTVC_i f  \text{ is SOSM on $[0, T]$ for $i=1,\ldots,M$}.\nonumber
\end{align}
with $\mathcal F$, $\mapTVC_i$, and $\lossTVC$ defined as in \eqref{eq:feasible_set_tvc}, \eqref{eq:constraints_cvf} and \eqref{eq:loss_cvf} resp. finds the polynomial vector field that has the lowest fitting error $\lossTVC(f)$ among all polynomial vector fields of degree $d$ that are contracting on a region containing the demonstrations $\x_p^{(i)}$.
\end{theorem}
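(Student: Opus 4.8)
\emph{Proof plan.} The plan is to establish two things: that \eqref{eq:sos_program} is literally a semidefinite program, and that an optimal solution of it is a globally optimal degree-$d$ polynomial vector field among all those that are contracting (with respect to $\Metric$) on a region containing the demonstrations. The argument assembles the two exactness theorems quoted just above together with the robustness Proposition~\ref{lem:size_contraction_tube}, on top of routine convex-algebra bookkeeping.

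First I would check SDP-representability. For each $i$ the integrand of $\lossTVC$ in \eqref{eq:loss_cvf} is the squared Euclidean norm of $f(\x_p^{(i)}(t)) - \dot{\x}_p^{(i)}(t)$, which is an affine function of the coefficient vector of $f$; integrating a finite sum of such squares over $t\in[0,T]$ yields a convex quadratic in those coefficients, and an epigraph variable plus a Schur complement rewrites it as a linear objective subject to one linear matrix inequality. Each constraint ``$\mapTVC_i f$ is SOSM on $[0,T]$'' is, by the very definition recalled before the theorems (it means $\mapTVC_i f$ admits the Dette--Studden decomposition $tV(t)+(T-t)W(t)$ or $V(t)+t(T-t)W(t)$ with $V,W$ SOSM), equivalent to the existence of auxiliary positive semidefinite Gram matrices for $V$ and $W$ together with finitely many linear equations matching coefficients of powers of $t$. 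Hence \eqref{eq:sos_program} has a linear objective, finitely many LMIs, and linear equalities — a semidefinite program — of size polynomial in $n$, $d$, $M$, and $\deg \x_p^{(i)}$.

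Next the exactness of the relaxation. Fix $i$. Since $\x_p^{(i)}$, $\dot{\x}_p^{(i)}$, the metric $\Metric$, and the candidate $f$ are all polynomial, $\mapTVC_i f$ defined in \eqref{eq:constraints_cvf} is a matrix-valued polynomial in the \emph{single} variable $t$, so the theorem quoted from \cite{dette_matrix_2002} applies verbatim: ``$\mapTVC_i f$ is SOSM on $[0,T]$'' holds if and only if $\mapTVC_i f(t) \succeq 0$ for every $t\in[0,T]$. Comparing with \eqref{eq:contraction_lmi}, this is exactly the statement that $f$ is $\tau$-contracting with respect to $\Metric$ at every point of the polynomialized demonstration curve $\{\x_p^{(i)}(t)\,:\,t\in[0,T]\}$. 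Thus the feasible set of \eqref{eq:sos_program} coincides — with no relaxation gap — with the set of degree-$d$ polynomial vector fields that are $\tau$-contracting with respect to $\Metric$ along all the demonstration curves.

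Finally I would invoke Proposition~\ref{lem:size_contraction_tube}. Because $\Metric \succeq \varepsilon\id$ by \eqref{eq:assumption_metric}, the right-hand side of \eqref{eq:contraction_lmi} carries the strict margin $-\tau\Metric \preceq -\tau\varepsilon\id \prec 0$, so any $f$ feasible for \eqref{eq:sos_program} — i.e.\ contracting along the curves — is, by continuity of $\mapTVC_i f$, contracting on a tube $U$ of positive radius around them; conversely anything contracting on a region $U$ containing the curves is in particular contracting along them and hence feasible for \eqref{eq:sos_program}. Since the objectives of \eqref{eq:sos_program} and \eqref{eq:least_squares_with_contraction} are identical, an optimal solution of the former minimizes the fitting error over precisely the class of degree-$d$ polynomial vector fields that are contracting on a region containing the demonstrations, which is the assertion of the theorem. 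The step needing the most care is this last interface: one must read ``contracting on a region containing the demonstrations'' as ``there exists such a region'', and track that it is the strict margin in \eqref{eq:contraction_lmi} (not merely $\mapTVC_i f(t)\succeq 0$) that forces the guaranteed region to have nonempty interior — otherwise curve feasibility would not produce a genuine contraction region and the quantitative radius bound of Proposition~\ref{lem:size_contraction_tube} would be vacuous.
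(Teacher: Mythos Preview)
Your proposal is correct and mirrors the paper's approach. The paper does not write out a formal proof but presents the theorem as a summary of the preceding subsection; your argument assembles exactly the three ingredients laid out there --- SDP representability of the quadratic objective and of the SOSM constraints, the univariate exactness theorem of Dette--Studden (so that ``SOSM on $[0,T]$'' coincides with ``PSD on $[0,T]$'' for the one-variable matrix polynomial $\mapTVC_i f$), and Proposition~\ref{lem:size_contraction_tube} to pass from contraction along the demonstration curves to contraction on a tube of positive radius around them.
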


To reiterate, the above sum-of-squares relaxation leads to no loss of optimality: the SDP above returns the globally optimal solution to the problem stated in~\ref{eq:least_squares_with_contraction}. Our numerical implementation uses the Splitting Conic Solver (SCS)~\cite{ocpb:16} for solving large-scale convex cone problems. 

\bachir{
\begin{remark}
Note that the time complexity of solving the SDP defined in \eqref{eq:sos_program} is bounded above by a polynomial function of the number of trajectories, the dimension $n$ of the space where they live, and the degree $d$ of the candidate polynomial vector field. In practice however, only small to moderate values for $n$ and $d$ can be solved for as the exponents appearing in this polynomial are prohibitively large. Significant progress has been made in recent years in inventing more scalable alternatives to SDPs based on {\it linear} and {\it second order cone} programming that can be readily applied to our framework~ \cite{ahmadi2014dsos}.
\end{remark}}

  
For the rest of this paper, our approach will be abbreviated as CVF-P, standing for Polynomial Contracting Vector Fields.

\subsection{Generalization Properties}





The contraction property of CVF-P generalizes to a wider region in the state space. The next proposition shows that any sufficiently smooth vector field that is feasible for the problem stated in \ref{eq:sos_program} is contracting on a ``tube'' around the demonstrated trajectories.

\begin{proposition}[\label{lem:size_contraction_tube}A lower bound on the contraction tube]
If  $f:\Omega \subseteq \R^n \mapsto \R^n$ is a twice continuously differentiable vector field that satisfies
$$ -\sym[\Metric(\x(t))\jac f(\x(t))] - \dot \Metric(\x(t)) \succeq \tau \Metric(\x)  \quad \forall t \in [0, T]$$
where $\Omega$ is a compact region of $\R^n$, $\tau$ is a positive constant, $\Metric(\x)$ is a positive definite metric, and $\x: [0, T] \mapsto \R^n$ is a path, 
then $f$ is $\tau/2\text{-contracting}$ with respect to the metric $\Metric(\x)$ on the region $U$ defined by
$$U \coloneqq \{ \x(t) + \delta \; | \; t \in [0, T],\; \|\delta\|_2 \le \varepsilon \} \cap \Omega,$$
where $\varepsilon$ is positive scalar depending only $\tau$ and on the smoothness parameters of $f(\x)$ and $\Metric(\x)$ and is defined explicitly in Eqn.~\ref{eq:size_contraction_tube}.
\end{proposition}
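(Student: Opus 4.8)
The plan is to use a continuity/perturbation argument: the map $\x \mapsto G(\x) \coloneqq -\sym[\Metric(\x)\jac f(\x)] - \dot\Metric(\x) - \tfrac{\tau}{2}\Metric(\x)$ is PSD (in fact $\succeq \tfrac{\tau}{2}\Metric(\x) \succeq \tfrac{\tau\varepsilon_0}{2}\id$, using the uniform positive definiteness \eqref{eq:assumption_metric} with constant $\varepsilon_0$) at every point $\x(t)$ on the demonstration path, and we want to show it stays PSD at every point within Euclidean distance $\varepsilon$ of the path. Concretely, I would first define the ``slack'' matrix $S(t) \coloneqq -\sym[\Metric(\x(t))\jac f(\x(t))] - \dot\Metric(\x(t)) - \tau\Metric(\x(t)) \succeq 0$ on $[0,T]$; the hypothesis gives $-\sym[\Metric\jac f] - \dot\Metric \succeq \tau\Metric$, so at a perturbed point $\x(t)+\delta$ we have
\[
G(\x(t)+\delta) = G(\x(t)) + \big(G(\x(t)+\delta) - G(\x(t))\big) \succeq \tfrac{\tau}{2}\Metric(\x(t)) + \Delta(t,\delta),
\]
where $\Delta(t,\delta)$ collects all the difference terms. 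It then suffices to bound $\|\Delta(t,\delta)\|_2 \le \tfrac{\tau\varepsilon_0}{2}$ in operator norm, since $\Metric(\x(t)) \succeq \varepsilon_0 \id$ would then force $G(\x(t)+\delta) \succeq 0$, i.e. $f$ is $\tfrac{\tau}{2}$-contracting there.

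The second step is the quantitative bound on $\|\Delta(t,\delta)\|_2$. Here I would invoke the $C^2$ smoothness of $f$ and the ($C^1$, matrix-valued) smoothness of $\Metric$ on the compact set $\Omega$: write constants $L_{\jac f}$ for a Lipschitz constant of $\x \mapsto \jac f(\x)$, $L_\Metric$ for a Lipschitz constant of $\Metric$, $L_{\dot\Metric}$ for a Lipschitz constant of $\x \mapsto \dot\Metric(\x)$ (which exists because $\dot\Metric_{ij}(\x) = \nabla\Metric_{ij}(\x)^\transpose f(\x)$ is a product of $C^1$ functions on the compact $\Omega$), and bounds $B_\Metric \coloneqq \sup_{\Omega}\|\Metric(\x)\|_2$, $B_{\jac f} \coloneqq \sup_{\Omega}\|\jac f(\x)\|_2$. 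Then, using submultiplicativity and the triangle inequality on the three groups of terms in $G$,
\[
\|\Delta(t,\delta)\|_2 \;\le\; \big(B_\Metric L_{\jac f} + B_{\jac f} L_\Metric + L_{\dot\Metric} + \tfrac{\tau}{2}L_\Metric\big)\,\|\delta\|_2 \;=:\; C\,\|\delta\|_2 .
\]
Setting
\begin{equation}\label{eq:size_contraction_tube}
\varepsilon \;\coloneqq\; \frac{\tau\,\varepsilon_0}{2\,C} \;=\; \frac{\tau\,\varepsilon_0}{2\big(B_\Metric L_{\jac f} + B_{\jac f} L_\Metric + L_{\dot\Metric} + \tfrac{\tau}{2}L_\Metric\big)}
\end{equation}
guarantees $\|\Delta(t,\delta)\|_2 \le \tfrac{\tau\varepsilon_0}{2}$ whenever $\|\delta\|_2 \le \varepsilon$, which completes the argument: for every point $\x(t)+\delta \in U$, the contraction LMI at level $\tau/2$ holds, so by the Definition of contraction $f$ is $\tau/2$-contracting on $U$.

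The main obstacle is not conceptual but bookkeeping: making the perturbation bound honest. In particular one must be careful that $\dot\Metric$ depends on $f$ (through $\dot\Metric_{ij}(\x) = \nabla\Metric_{ij}(\x)^\transpose f(\x)$), so its Lipschitz constant $L_{\dot\Metric}$ legitimately involves $\sup_\Omega\|f\|_2$, $\sup_\Omega\|\nabla\Metric_{ij}\|_2$, and the Lipschitz constants of $f$ and of $\nabla\Metric_{ij}$ — all finite on the compact $\Omega$ by the $C^2$/$C^1$ hypotheses. A minor subtlety is that $U$ is intersected with $\Omega$, which is exactly what lets us use the compactness-derived global constants; points of the nominal $\varepsilon$-tube lying outside $\Omega$ are simply discarded and no claim is made there. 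One should also note the path $\x(\cdot)$ need not be a trajectory of $f$ for this statement — only the LMI along it is used — so no invariance argument is needed; the exponential convergence of infinitesimal displacements inside $U$ then follows from the computation already done after the Definition of contraction, now with rate $\tau/2$.
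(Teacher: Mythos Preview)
Your argument is correct and follows the same overall strategy as the paper: show that the matrix $G(\x) \coloneqq -\sym[\Metric(\x)\jac f(\x)] - \dot\Metric(\x) - \tfrac{\tau}{2}\Metric(\x)$ is PSD along the path with slack $\succeq \tfrac{\tau}{2}\Metric(\x(t))$, and then use a Lipschitz/continuity bound to propagate positive semidefiniteness to an $\varepsilon$-tube. The difference is purely in how the perturbation is quantified. The paper works \emph{entry-wise}: it sets $K \coloneqq \max_{i,j}\sup_{\x\in\Omega}\|\nabla G_{ij}(\x)\|_2$, bounds $\max_{i,j}|G_{ij}(\x(t)+\delta)-G_{ij}(\x(t))| \le K\|\delta\|_2$, and then invokes the elementary lemma $|\lambda_{\min}(A)-\lambda_{\min}(B)| \le n\max_{i,j}|A_{ij}-B_{ij}|$ to control the eigenvalue shift, arriving at $\varepsilon = \tfrac{\tau c}{2nK}$ with $c = \min_{\x\in\Omega}\lambda_{\min}(\Metric(\x))$. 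Your operator-norm route bypasses that auxiliary lemma and the dimension factor $n$, at the price of unpacking the Lipschitz constant of $G$ into the sum $B_\Metric L_{\jac f} + B_{\jac f} L_\Metric + L_{\dot\Metric} + \tfrac{\tau}{2}L_\Metric$. Both yield valid (and in general incomparable) explicit lower bounds on the tube radius; note, however, that the specific formula the proposition references as Eqn.~\eqref{eq:size_contraction_tube} is the paper's $\tfrac{\tau c}{2nK}$, not yours.
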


For the proof we will need the following simple fact about symmetric matrices.
\begin{lemma}\label{lem:lip_smallest_eigen}
For any $n \times n$ symmetric matrices $A$ and $B$
$$|\lambda_{\min}(A) - \lambda_{\min}(B)| \le n \max_{ij} |A_{ij} - B_{ij}|,$$
where $\lambda_{\min}(\cdot)$ denotes the smallest eigenvalue function.
\end{lemma}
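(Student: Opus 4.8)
The plan is to establish this Lipschitz-type estimate through the variational (Rayleigh-quotient) characterization of the smallest eigenvalue, namely $\lambda_{\min}(A) = \min_{\|v\|_2 = 1} v^\transpose A v$, which is valid precisely because $A$ and $B$ are symmetric and hence have real spectra. The underlying idea is to bound the change in this minimum by evaluating both quadratic forms at a single well-chosen test vector, rather than tracking eigenvectors directly.

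First I would set $E \coloneqq A - B$, which is again symmetric. Let $v$ be a unit eigenvector attaining $\lambda_{\min}(B)$, so that $\lambda_{\min}(B) = v^\transpose B v$. Since $\lambda_{\min}(A)$ is a minimum over \emph{all} unit vectors, using this particular $v$ as a (generally suboptimal) test vector yields
$$\lambda_{\min}(A) \le v^\transpose A v = v^\transpose B v + v^\transpose E v = \lambda_{\min}(B) + v^\transpose E v,$$
and therefore $\lambda_{\min}(A) - \lambda_{\min}(B) \le v^\transpose E v$.

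Next I would bound the quadratic form entrywise: $v^\transpose E v = \sum_{i,j} v_i v_j E_{ij} \le \max_{ij}|E_{ij}| \cdot \left(\sum_i |v_i|\right)^2 = \max_{ij}|E_{ij}| \cdot \|v\|_1^2$. By Cauchy--Schwarz, $\|v\|_1 \le \sqrt{n}\,\|v\|_2 = \sqrt{n}$, so $\|v\|_1^2 \le n$, which gives $\lambda_{\min}(A) - \lambda_{\min}(B) \le n \max_{ij}|E_{ij}|$. Swapping the roles of $A$ and $B$ (and noting that $\max_{ij}|A_{ij}-B_{ij}|$ is symmetric in the two matrices) produces the reverse inequality, and combining the two one-sided bounds yields the absolute value in the claim.

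Since every step is a direct application of the variational principle together with Cauchy--Schwarz, there is no genuine obstacle here; the only point requiring care is securing precisely the constant $n$ rather than a larger factor. This is achieved by applying the $\ell_1$-to-$\ell_2$ inequality $\|v\|_1^2 \le n\|v\|_2^2$ to the optimal unit vector, as above. (A Weyl-type route via $|\lambda_{\min}(A)-\lambda_{\min}(B)| \le \|E\|_2 \le \|E\|_F$ also delivers the factor $n$, but the Rayleigh-quotient argument is self-contained and keeps the constant transparent.)
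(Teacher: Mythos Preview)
Your argument is correct: the variational characterization plus the bound $v^\transpose E v \le \max_{ij}|E_{ij}|\cdot\|v\|_1^2 \le n\max_{ij}|E_{ij}|$ for a unit vector $v$, followed by swapping $A$ and $B$, establishes the lemma with exactly the constant $n$.

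As for comparison with the paper: there is nothing to compare. The paper states this lemma as a ``simple fact about symmetric matrices'' and provides no proof; it is invoked immediately in the proof of Proposition~\ref{lem:size_contraction_tube}. Your Rayleigh-quotient argument is precisely the kind of short justification the paper omits, and it is self-contained and sharp in the constant.
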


\begin{proof}[Proof of Proposition \ref{lem:size_contraction_tube}]
Let $f$, $\Metric$, $\Omega$ and $\tau$ be as in the statement of Proposition \ref{lem:size_contraction_tube}.
\bachir{Define $c \coloneqq \min_{\x \in \Omega} \lambda_{\min}(\Metric(\x))$.
Notice that since the metric $\Metric(\x)$ is uniformly positive definite, then $c > 0$.} Let us now define
\begin{equation}\label{eq:size_contraction_tube}
\varepsilon \coloneqq \frac{\tau c}{2nK} > 0
\end{equation}
where $K$ is the scalar equal to
$$\max_{1 \le i,j\le n} \sup_{\x \in \Omega} \|\frac{\partial}{\partial \x} \left(\sym[\Metric(\x)\jac f(\x)] + \dot \Metric(\x) - \frac{\tau}{2}\Metric(\x)\right)_{ij}\|_2.$$

Fix $t \in [0, T]$, and let $\delta$ be a vector in $\R^n$ such that $\ltwonorm{\delta}~\le~\varepsilon$. Our aim is to prove that the matrix $R^\delta$ defined by
$$-\sym[\Metric(\x(t)+\delta)\jac f(\x(t)+\delta)] - \dot \Metric(\x(t)+\delta) -  \frac{\tau}{2} \Metric(\x(t)+\delta)$$
is positive semidefinite.
Notice that our choice for $K$ guarantees that the maps $\delta \mapsto R^\delta_{ij}$ are $L\text{-Lipchitz}$ for $i, j =1,\ldots,n$, therefore 
$\max_{ij} |R^\delta_{ij} - R^0_{ij}| \le K\varepsilon$. Using Lemma \ref{lem:lip_smallest_eigen} we conclude that the smallest eigenvalues of $R^{\delta}$  and $R^0$ are within a distance of \bachir{$nK\varepsilon$} of each other.
\bachir{Since we assumed that $R^0 \succeq \frac{\tau}2 \Metric(\x(t))$, then $\lambda_{\min}(R^0)$ is at least $c \frac{\tau}2$, and therefore $\lambda_{\min}(R^\delta)$ is at least $c\frac{\tau}{2} - nK\varepsilon$. We conclude that our choice of $\varepsilon$ in \eqref{eq:size_contraction_tube} guarantees that $R^\delta$ is positive semidefinite.
}
\end{proof}

We note that the estimate obtained in this proposition is quite conservative. In practice the contraction tube is much larger than what is predicted here.


\section{Empirical Comparisons: Handwriting Imitation}

We evaluate our methods on the LASA library of two-dimensional human handwriting motions commonly used for benchmarking dynamical systems based movement generation techniques in imitation learning settings~\cite{khansari2017learning}\cite{lemme2015open}\cite{harish}. This dataset contains $30$ handwriting motions recorded with a pen input on a Tablet PC.  For each motion, the user was asked to draw 7  demonstrations of a desired pattern, by starting from different initial positions and ending at the same final point. Each demonstration trajectory comprises of $1000$ position ($\x$) and velocity ($\dot{\x}$) measurements. We use 4 demonstrations for training and 3 demonstrations for testing as shown in Figure~\ref{fig:lasa}.

\begin{figure}[h!]
\centering
\includegraphics[height=3cm, width=0.45\linewidth]{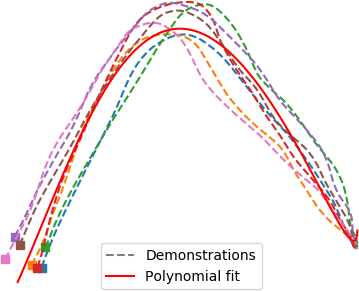}~
\includegraphics[height=3cm, width=0.45\linewidth]{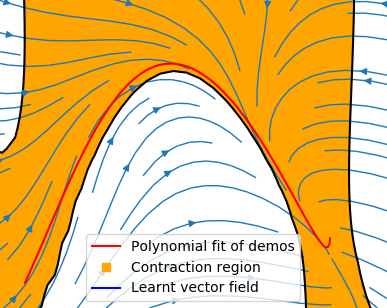}
\caption{\label{fig:lasa}\bachir{ The figure on the left shows demonstration trajectories (dotted) and the polynomial fit of the demonstrations (solid line) for the {\it Angle} shape. The figure on the right visualizes both the polynomial fit (red), the learnt vector field (blue), and the contraction region (orange) for the incrementally stable vector field learned using our method.}}
\end{figure}

\begin{table}[h]\footnotesize
  \bachir{
    \begin{tabular}{|c|c|c|c|c|} 
      \hline
      Metric & DMP & SEDS & CLFDM & \textbf{CVF-P}\\
      \hline
      \multicolumn{5}{ |c| }{\bf Reproduction Accuracy}\\
      \hline
      TrainingTrajectoryError   & 4.1 & 7.2  & 4.9 & 6.5\\
      \hline                                                                
      TrainingVelocityError     & 7.4 & 14.6 & 11.0 & 13.9\\
      \hline                                                                
      TestTrajectoryError       & 5.5 & 4.6  & 12.2 & 3.8 \\
      \hline                                                                
      TestVelocityError         & 8.7 & 11.3 & 15.5 & 11.4\\
      \hline
      \multicolumn{5}{ |c| }{\bf Stability}\\
      \hline
      DistanceToGoal            & 3.6 & 3.2 & 6.7 & 2.5\\
      \hline                    
      DurationToGoal            & -   & 3.9 & 4.3 & 3.3\\
      \hline                                                                
      NumberReachedGoal         & 0/7 & 7/7 & 7/7 & 7/7\\
      \hline                                                                
      GridDuration (sec)          & 5.9 & 3.7 & 9.7 & 1.9\\
      \hline                                                                
      GridFractionReachedGoal   & 6\% & 100\% & 100\% & 100\%\\
      \hline                                                                
      GridDistanceToGoal        & 3.3 & 1.0 & 1.0 & 1.0 \\
      \hline               
      GridDTWD ($\times 10^{4}$)& 2.4 & 1.4 & 1.4 & 2.0 \\
      \hline
      \multicolumn{5}{ |c| }{{\bf Training and Integration Speed} (in seconds)}\\
      \hline
      TrainingTime         & 0.05 & 2.1 & 2.8 & 0.2 \\
      \hline
      IntegrationSpeed     & 0.21 & 0.06 & 0.15 & 0.01 \\
      \hline
    \end{tabular}
    }
    \vspace{.2cm}
  \caption{\label{tbl:lasa}LASA Angle shape benchmarks. Our approach is CVF-P.}
\end{table}

We report in Table \ref{tbl:lasa} comparisons on the {\it Angle} shape against state of the art methods for estimating stable dynamical systems, the Stable Estimator of Dynamical Systems (SEDS)~\cite{khansari2011learning}, Control Lyapunov Function-based Dynamic Movements (CLFDM)~\cite{CLFDM} and Dynamic Movement Primitives (DMP)~\cite{ijspeert2013dynamical}. The training process in these methods involves non-convex optimization with no global optimality guarantees. Additionally, DMPs can only be trained from one demonstration one degree-of-freedom at a time. For all experiments, we learn degree 5 CVF-Ps with $\tau=1.0$ and $\Metric(\x) = \id$. We report the following imitation quality metrics.
\par {\bf Reproduction Accuracy}: How well does the vector field reproduce positions and velocities in training and test demonstrations, when started from same initial conditions and integrated for the same amount of time as the human movement duration ($T$).   Specifically, we measure reproduction error with respect to $m$ demonstration trajectories as,
\begin{eqnarray*}
\textrm{TrajectoryError} = \frac{1}{m}\sum_{i=1}^m\frac{1}{T_i}\sum_{t=0}^{T_i} \|\x^i(t) - \hat{\x}^i(t)\|_2\\
\textrm{VelocityError} = \frac{1}{m}\sum_{i=1}^m\frac{1}{T_i}\sum_{t=0}^{T_i} \|\dot{\x}^i(t) - \hat{\dot{\x}}^i(t)\|_2.
\end{eqnarray*}
The metrics {\it TrainingTrajectoryError, TestTrajectoryError, TrainingVelocityError, TestVelocityError} report these measures with respect to training and test demonstrations. At the end of the integration duration ($T$), we also report {\it DistanceToGoal}: how far the final state is from the goal (origin).  Finally, to account for the situation where the learnt dynamics is somewhat slower than the human demonstration, we also generate trajectories for a much longer time horizon ($30T$) and report ${\it DurationToGoal}$: the time it took for the state to enter a ball of radius $1mm$ around the goal, and how often this happened for the $7$ demonstrations ({\it NumReachedGoal}). 
\par {\bf Stability}:  To measure stability properties, we evolve the dynamical system from $16$ random positions on a grid enclosing the demonstrations for a long integration time horizon (30T). We report the fraction of trajectories that reach the goal ($GridFraction$); the mean duration to reach the goal when that happens ($GridDuration$); the mean distance to the Goal ($GridDistanceToGoal$) and the closest proximity of the generated trajectories to a human demonstration, as measured using Dynamic Time Warping Distance ($GridDTWD$)~\cite{keogh2005exact} (since in this case trajectories are likely of lengths different from demonstrations).
\par {\bf Training and Integration Speed}: We measure both training time as well as time to evaluate the dynamical system which translates to integration speed.
 
\begin{figure}[t]
    \centering
    \includegraphics[height=5cm]{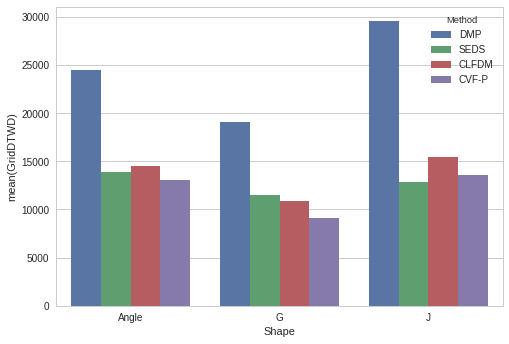}
    \caption{GridDTWD comparison on Angle, G and J shapes.}
    \label{table_grid_dtwd_metric}
\end{figure}

 It can be seen that our approach is highly competitive on most metrics: reproduction quality, stability, and training and inference speed. In particular, it returns the best mean dynamic time warping distance to the demonstrations when initialized from points on a grid. A comparison of {\it GridDTWD} on a few other shapes is shown in Figure. \ref{table_grid_dtwd_metric}.

\section{Pick-and-Place with Obstacles}

\begin{figure}[t]
  \centering
    \begin{subfigure}[t]{0.15\textwidth}
		\centering
		\includegraphics[width=1\textwidth]{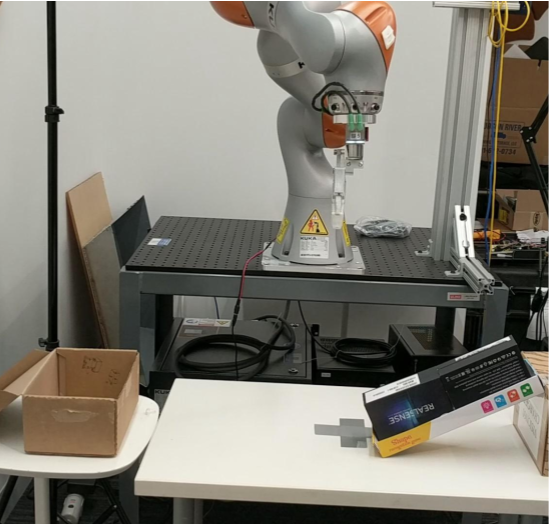}
		\caption{Home}
	\end{subfigure}
	\begin{subfigure}[t]{0.15\textwidth}
		\centering
		\includegraphics[width=1\textwidth]{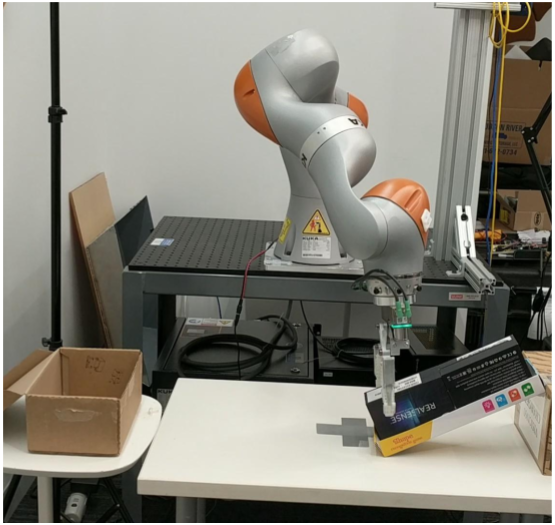}
	    \caption{Pick}
	\end{subfigure}
	\begin{subfigure}[t]{0.15\textwidth}
		\centering
		\includegraphics[width=1\textwidth]{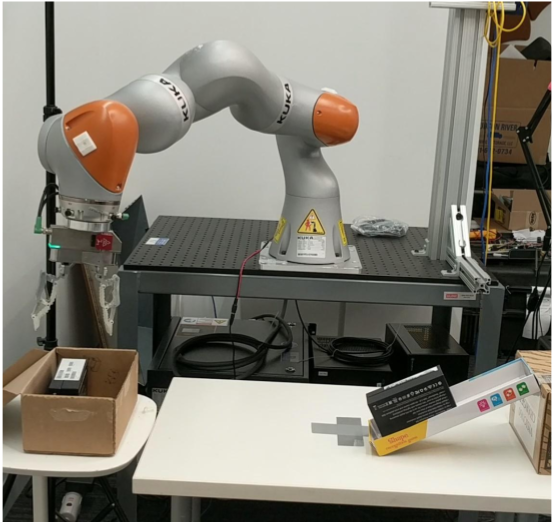}
		\caption{Place}
	\end{subfigure}
	\caption{In our task, the robot must move between the (a) home to (b) pick, (c) a place positions.}
	\label{pick_place_task_setup}\vspace{-.5cm}
\end{figure}

We consider a kitting task shown in Figure \ref{pick_place_task_setup} where objects are picked from a known location and placed into a box.  A teleoperator quickly demonstrates a few trajectories guiding a 7DOF KUKA IIWA arm to grasp objects and place them in a box.  After learning from demonstrations, the robot is expected to continually fill boxes to be verified and moved by a human working in close proximity freely moving obstacles in and out of the workspace. The arm is velocity-controlled in joint space at $50$ Hz.

\begin{figure*}[t]
  \centering
      \begin{subfigure}[t]{0.22\textwidth}
		\centering
		\includegraphics[trim={0 0 1cm 1cm},clip, width=.8\textwidth]{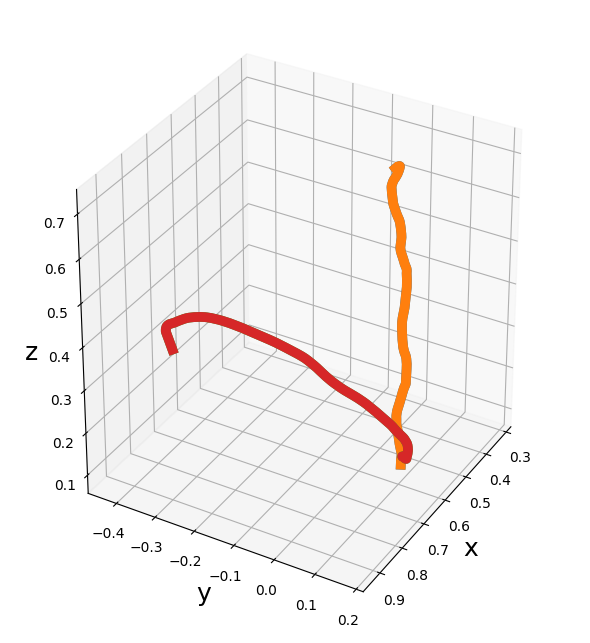}
		\caption{Demonstration Trajectory}
		\label{demo_trajectory}
	\end{subfigure}
    \begin{subfigure}[t]{0.22\textwidth}
		\centering
		\includegraphics[trim={0 0 1cm 1cm},clip, width=.8\textwidth]{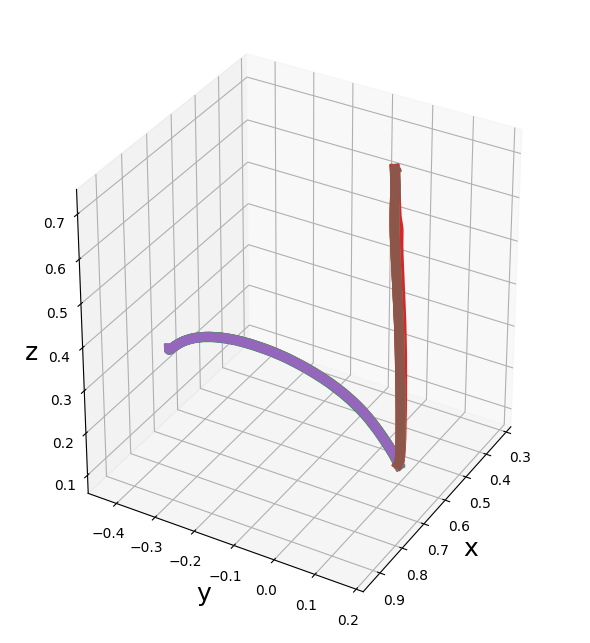}
		\caption{CVF-P}
		\label{pick_place_cvf-run}
	\end{subfigure}
	\begin{subfigure}[t]{0.22\textwidth}
		\centering
		\includegraphics[trim={0 0 1cm 1cm},clip, width=.8\textwidth]{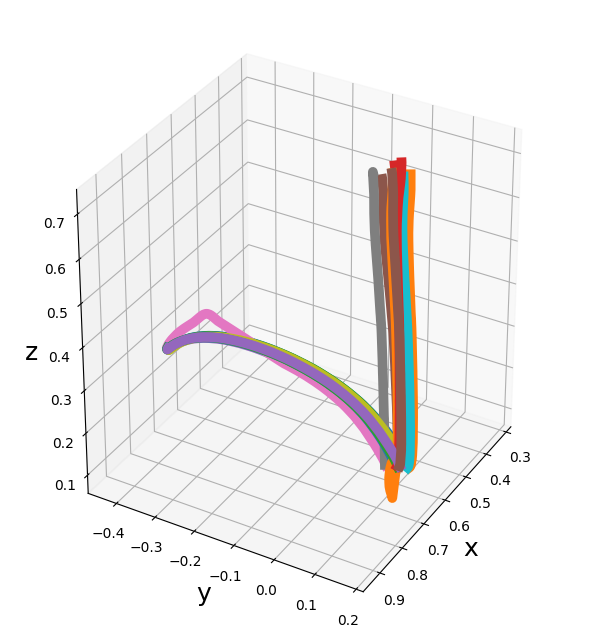}
		\caption{CVF-P, 0.05 Noise}
		\label{pick_place_cvf-run-small-noise}
	\end{subfigure}
    \begin{subfigure}[t]{0.22\textwidth}
		\centering
		\includegraphics[trim={0 -1.75cm 0 0},clip, width=.8\textwidth]{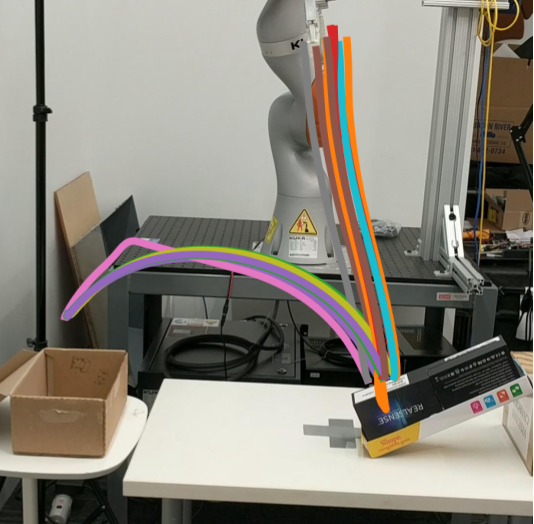}
		\caption{CVF-P, 0.05 Noise}
		\label{pick_place_cvf-run-small-noise-overlay}
	\end{subfigure}
	\begin{subfigure}[t]{0.22\textwidth}
		\centering
		\includegraphics[trim={0 0 1cm 1cm},clip, width=.8\textwidth]{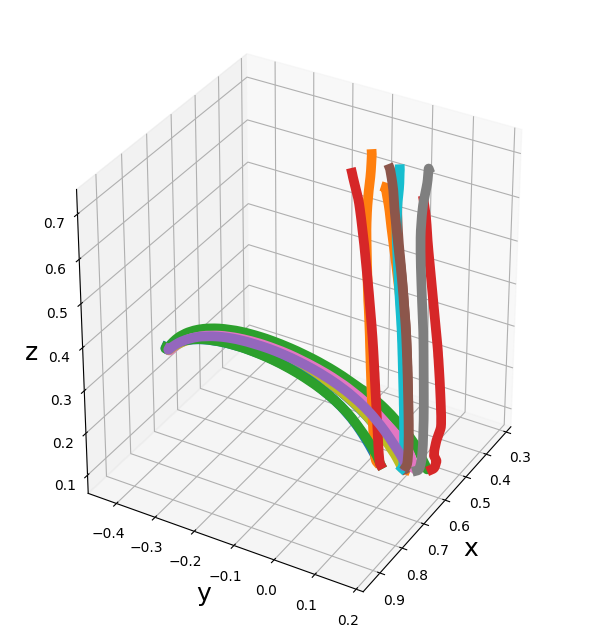}
		\caption{CVF-P, 0.1 Noise}
		\label{pick_place_cvf-run-big-noise}
	\end{subfigure}
	\begin{subfigure}[t]{0.22\textwidth}
		\centering
		\includegraphics[trim={0 0 1cm 1cm},clip, width=.8\textwidth]{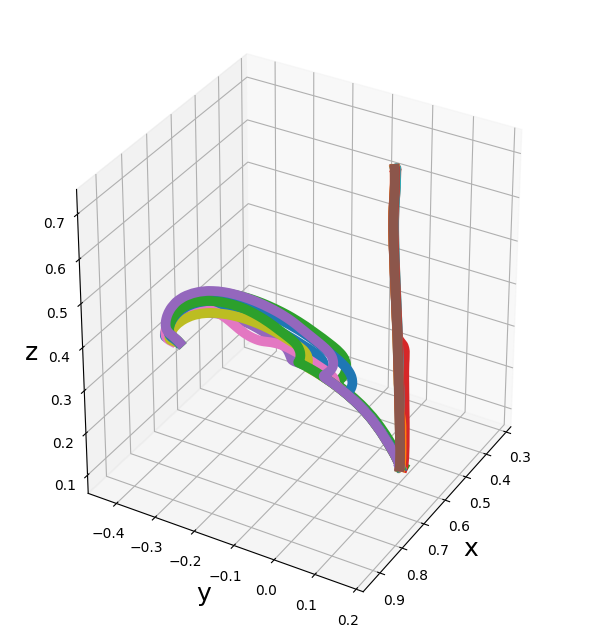}
	    \caption{CVF-P and Obstacle}
	    \label{pick_place_cvf-run-obstacles}
	\end{subfigure}
    \begin{subfigure}[t]{0.22\textwidth}
		\centering
		\includegraphics[trim={0 0 1cm 1cm},clip, width=.8\textwidth]{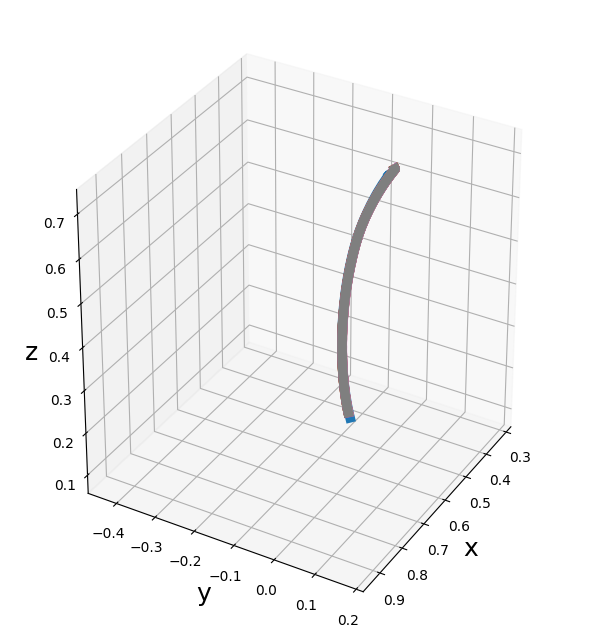}
		\caption{VF-P, No Contraction, \\No Noise, No Obstacle}
		\label{pick_place_cvf-run-no-noise-no-contraction}
	\end{subfigure}
	\begin{subfigure}[t]{0.22\textwidth}
		\centering
		\includegraphics[trim={0 0 1cm 1cm},clip, width=.8\textwidth]{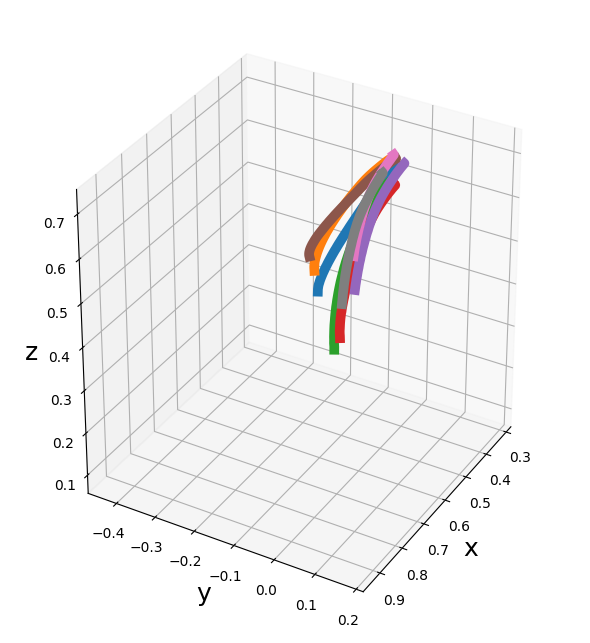}
	    \caption{VF-P, No Contraction, \\0.05 Noise, No Obstacle}
	    \label{pick_place_cvf-run-small-noise-no-contraction}
	\end{subfigure}
	\caption{(a) A user demonstrated trajectory visualization shows the path of the end effector through cartesian space. (b) Eight trajectories executed using a vectorfield in joint space learned from the demonstration. (c,d) Eight trajectories with uniform noise between [-0.05, 0.05] radians was added per-joint to the initial joint state. (e) Eight trajectories with uniform noise between [-0.1, 0.1] added to the initial joint state. (f) Eight new trajectories with an object in the way that modulates the learned vector field. Notice the motion deviates, and then returns to the desired trajectory. (g) Eight trajectories without contraction, the arm deviates from the demonstration and cannot complete the trajectory. (h) Eight trajectories without contraction and [-0.05, 0.05] noise, the arm cannot complete the trajectory.}
	\label{joint_space_trajectories}
\end{figure*}

\subsection{Demonstration Trajectory}
Figure \ref{demo_trajectory} shows the demonstration pick and place trajectory collected from the user. This trajectory was collected using an HTC Vive controller operated by a user standing in front of and watching the robot move through the demonstration as it is produced.  Different buttons on the remote were used to open/close the gripper, send the arm to the Home position, and indicate the start of a new trajectory.  The pick and place task was collected as two separate trajectories, one for the pick motion and another for the place motion. 

\subsection{Learning a Composition of Pick and Place CVF-Ps}
Using the demonstration trajectory, two different polynomial contracting vector fields (CVF-Ps) were fit to the data, one for the pick motion, one for the place.  These trajectories were fit to a degree 2 polynomial with $\tau=0.1$ and $\Metric(\x) = \id$, using an SCS solver run for 2500 iterations.  For the ease of visualization, we show the trajectories in cartesian space in Figure \ref{joint_space_trajectories}. The CVF-P was fit to the trajectory in the 7-dimensional joint space.  The arm was then run through using the vector field eight times starting from the home position. Each trajectory was allowed to run until the $L_2$-norm of the arm joint velocities dropped below a threshold of $0.01$. At that point, the arm would begin to move using the second vector field.  The trajectories taken by the arm are shown in Figure \ref{pick_place_cvf-run}.  The eight runs have very little deviation from each other.  

\subsection{Generalization to Different Initial Poses}
Next, noise is added to the home position of the arm, and again the vector field is used to move the arm through the task. Figure \ref{pick_place_cvf-run-small-noise} noise is added uniformly from the range [-0.05, 0.05] radians to each value of each joint of the arm's starting home position. Figure \ref{pick_place_cvf-run-small-noise-overlay}, shows these same trajectories overlaid on the Kuka arm. In Figure \ref{pick_place_cvf-run-big-noise} uniform noise is added in the same manner from the range $[-0.1, 0.1]$. Due to contraction,  trajectories are seen to converge from random initial conditions.   

\subsection{What happens without contraction constraints?}
In Figure \ref{pick_place_cvf-run-no-noise-no-contraction} the arm is run eight times using a vector field without contraction.  While the arm is consistent in the trajectory that it takes, the arm moves far from the demonstrated trajectory, and eventually causes the emergency break to activate at joint limits, failing to finish the task. 

In Figure \ref{pick_place_cvf-run-small-noise-no-contraction} The arm is again run eight times without contraction with noise added uniformly from the range [-0.05, 0.05] to each the value of each joint of the arm's starting home position.  The trajectory of the arm varies widely and had to be cut short as it was continually causing the emergency break to engage. 

\subsection{Whole-body Obstacle Avoidance}
\label{sec:dynamic_collision_avoidance}
Here we enable a Kuka robot arm to follow demonstrated trajectories while avoiding obstacles unseen during training. In the system we describe below,  collisions are avoided against any part of robot body. At every timestep, a commodity depth sensor like the Intel RealSense or PhaseSpace motion capture acquires a point cloud representation of the obstacle.  Our setup is along the lines of \cite{kappler2018real}, although we do not model occluded regions as occupied. At this point, our demonstrations and trajectories exist in joint space $\mathcal J \approx \mathbb R^7$, while our obstacle pointclouds exists in Cartesian space $\mathcal C \approx \mathbb R^3$ with an origin at the base of the robot.  
\newcommand{\vfobstacle}{h^{\text{obstacles}}}
\newcommand{\ikmap}{\operatorname{IK}}
 
\subsubsection{Cartesian to Joint Space Map} We pre-compute a set-valued {\it inverse kinematic map} $\ikmap$ that maps a position $c \in \mathcal C$ to a subset of $\mathcal J$ containing all the joint configurations that would cause any part of the arm to occupy the position $c$. 
\begin{figure}[!h]
  \centering
  \includegraphics[width=0.975\linewidth]{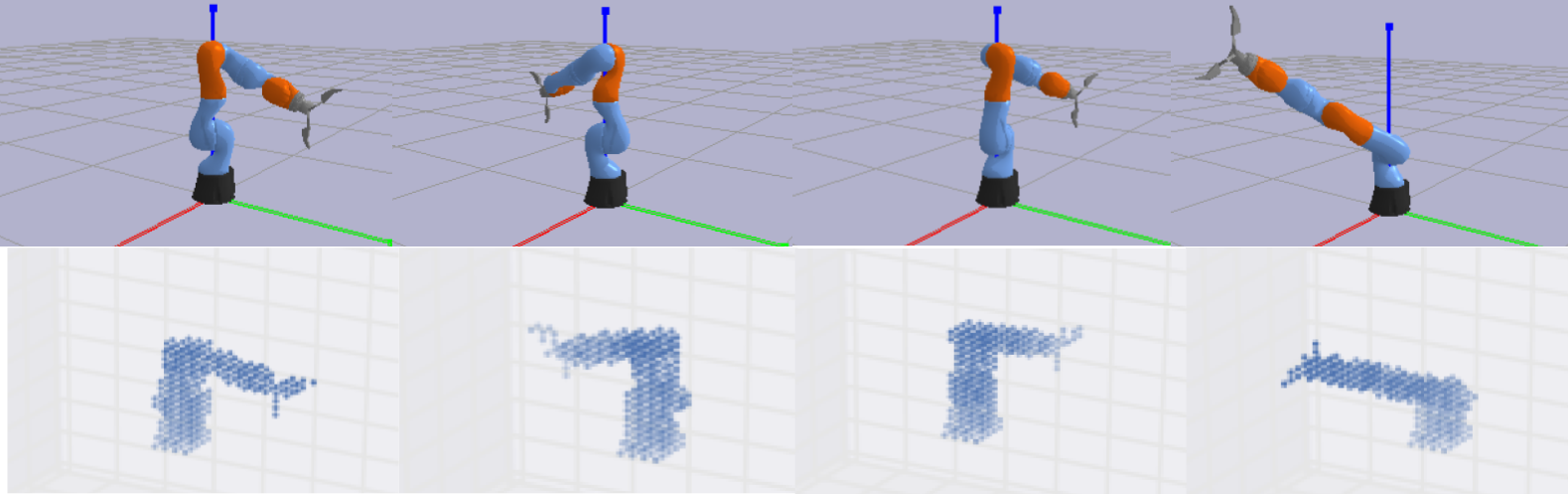}
  \caption{In order to produce a cartesian to joint space mapping, pybullet~\cite{coumans2017} was used to place the arm in over 658,945 configurations such as the 4 in the top row.  Then a voxelization of the arm was produced in this pose using binvox. }  
  \label{fig:voxelization}
\end{figure}

\newcommand{\simmap}{{\operatorname{FK}}}
More formally, the obstacles positions are known in Cartesian space $\mathcal C$ different from the control space $\mathcal J$ of the robot. (e.g. we control the joint angles rather than end-effector pose.)
The Kuka arm simulator allows us to query the forward kinematics map $\simmap: \mathcal J \rightarrow \mathcal C$. To compute the inverse of this map, the joint space of the robot was discretized into 658,945 discrete positions.  These discrete positions were created by regularly sampling each joint from a min to max angle using a step size of 0.1 radians. As shown in Figure \ref{fig:voxelization}, the robot was positioned at each point of the 658,945 discrete joint space points within pybullet\cite{coumans2017}, and the robot was voxelized using binvox\cite{min2004binvox}. This produced the map $\simmap$.  We then compute $\ikmap \coloneqq \simmap^{-1}$.

\subsubsection{Modulation of Contracting Vector Fields}
\begin{figure}[!h]
    \centering
    \includegraphics[height=3.0cm, width=\linewidth]{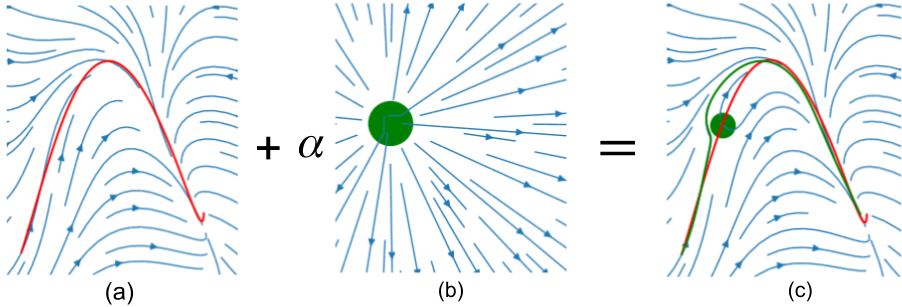}
    	\caption{\label{fig:explain_modulation_caption}(a) Shows a vector field $f$ learnt from a nominal path (red). (b) Depicts a repulsive vector field $\vfobstacle$ associated with an obstacle (green disk). (c) Shows modulated vector field $\tilde f$ (blue) plotted with a sample trajectory (green).}
    \label{fig:explain_modulation}\vspace{-.5cm}
\end{figure}
The obstacle positions are then incorporated in a repulsive vector-field to push the arm away from collision as it moves,
\begin{equation}\label{eq:modulation_function}
\vfobstacle(t, \x) \coloneqq \sum_{\substack{\text{positions of}\\\text{obstacles $c$} \\\text{at time $t$}}} \quad \sum_{j \in T^{-1}(c)} \frac{\x - j}{\ltwonorm{\x-j}^r},
\end{equation}
where the integer $r$ control how fast the effect of this vector field decays as a function of distance (a high value of $r$ makes the effect of $\vfobstacle$ local, while a small value makes its effect more uniform.)
\newcommand{\modulatedvf}{\tilde f}
This vector field is added to our learnt vector-field $f$ to obtain a {\it modulated vector field} (depicted in Figure \ref{fig:explain_modulation})
$$\modulatedvf(t, \x) = f(\x) + \alpha \; \vfobstacle(t, \x),$$
where $\alpha$ is positive constant that is responsible \bachir{for} controlling the strength of the modulation,
that is then fed to the Kuka arm. If the modulation is local and the obstacle is well within the joint-space contraction tube, we expect the motion to re-converge to the demonstrated behavior.


We point out that it is possible to use alternative modulation methods that come with different guarantees and drawbacks. In~\cite{khansari2012dynamical,slotine_obstacles} for instance, the authors use a multiplicative modulation function that preserves equilibrium points in the case of convex or concave obstacles.

While our approach does not enjoy the same guarantees, its additive nature allows us to handle a large number of obstacles as every term in Eqn.~\ref{eq:modulation_function} can be computed in a distributed fashion, and \bachir{furthermore}, we do not need to impose any restrictions on the shape of the obstacles (convex/concave). This is particularly important as our control space $\mathcal J$ is different from the space $\mathcal C$ where the obstacle are observed, and the map $\ikmap$ that links between the two spaces can significantly alter the shape of an obstacle in general (e.g. a sphere in cartesian space can be mapped to a disconnected set in joint space).








\subsubsection{Real-time Obstacle Avoidance}
Here, using a \bachir{{\it real-time motion capture system}}, an obstacle is introduced to the robot's workspace as shown in Figure \ref{pick_place_obstacle}.  Eight trajectories were executed from the home position with the obstacle in the workspace, and the resultant trajectories are shown in Figure \ref{pick_place_cvf-run-obstacles}. At each timestep, the objects position was returned by the motion capture system. The point in Cartesian space was used to modulate the joint space vectorfield as described in Section \ref{sec:dynamic_collision_avoidance}. The tasks are accomplished as the arm avoids obstacles but remains within the joint-space contraction tube re-converging to the demonstrated behavior.

\section{Conclusion}
This work presents a novel approach to teleoperator imitation using contracting vector fields that are globally optimal with respect to loss minimization and providing continuous-time guarantees on the behaviour of the system when started from within a contraction tube around the demonstration.  Our approach compares favorably with other movement generation techniques. Additionally, we build a workspace cartesian to joint space map for the robot, and utilize it to update our CVF on the fly to avoid dynamic obstacles.  We demonstrate how this approach enables the transfer of knowledge from humans to robots for accomplishing a real world robotic pick and place task. Future work includes greater scalability of our solution, composition of CVFs for more complex tasks, integrating with a perception module and helping bootstrap data-hungry reinforcement learning approaches.

\bibliographystyle{plain}
\bibliography{citations}


\end{document}